\newtheorem{theorem}{Theorem}
\DeclareMathOperator*{\argmax}{arg\,max}
\title{Rethinking Kernel Methods for \\ Node Representation Learning on Graphs}
\author{%
  Yu Tian\thanks{indicates equal contributions.} \\
  Rutgers University\\
  \texttt{yt219@cs.rutgers.edu} \\
  \And
  Long Zhao\footnotemark[1] \\
  Rutgers University\\
  \texttt{lz311@cs.rutgers.edu} \\
  \AND
  Xi Peng \\
  University of Delaware \\
  \texttt{xipeng@udel.edu} \\
  \And
  Dimitris N. Metaxas \\
  Rutgers University\\
  \texttt{dnm@cs.rutgers.edu} \\
}
\begin{document}

\maketitle

\begin{abstract}
Graph kernels are kernel methods measuring graph similarity and serve as a standard tool for graph classification. However, the use of kernel methods for node classification, which is a related problem to graph representation learning, is still ill-posed and the state-of-the-art methods are heavily based on heuristics. Here, we present a novel theoretical kernel-based framework for node classification that can bridge the gap between these two representation learning problems on graphs. Our approach is motivated by graph kernel methodology but extended to learn the node representations capturing the structural information in a graph. We theoretically show that our formulation is as powerful as any positive semidefinite kernels. To efficiently learn the kernel, we propose a novel mechanism for node feature aggregation and a data-driven similarity metric employed during the training phase. More importantly, our framework is flexible and complementary to other graph-based deep learning models, e.g., Graph  Convolutional Networks (GCNs). We empirically evaluate our approach on a number of standard node classification benchmarks, and demonstrate that our model sets the new state of the art. The source code is publicly available at \url{https://github.com/bluer555/KernelGCN}.
\end{abstract} 
\section{Introduction}
\label{sec:introduction}

Graph structured data, such as citation networks~\cite{giles1998citeSeer,mcvallum2000automating,sen2008collective}, biological models~\cite{gilmer2017neural,you2018graph}, grid-like data~\cite{tang2018quantized,tian2018cr,zhu2018generative} and skeleton-based motion systems~\cite{chen2019dynamic,yan2018spatial,zhao2018learning,zhao2019semantic}, are abundant in the real world. Therefore, learning to understand graphs is a crucial problem in machine learning. 
Previous studies in the literature generally fall into two main categories: (1) \emph{graph classification}~\cite{draief2018kong,kipf2017semi,xu2019powerful,zhang2018end,zhang2018retgk}, where the whole structure of graphs is captured for similarity comparison; (2) \emph{node classification}~\cite{abu2018n,kipf2017semi,velivckovic2018graph,xu2018representation,zhang2018graph}, where the structural identity of nodes is determined for representation learning.

For graph classification, kernel methods, i.e., graph kernels, have become a standard tool~\cite{kriege2017unifying}. Given a large collection of graphs, possibly with node and edge attributes, such algorithms aim to learn a kernel function that best captures the similarity between any two graphs. The graph kernel function can be utilized to classify graphs via standard kernel methods such as support vector machines or $k$-nearest neighbors. Moreover, recent studies~\cite{xu2019powerful,zhang2018end} also demonstrate that there has been a close connection between Graph Neural Networks (GNNs) and the Weisfeiler-Lehman graph kernel~\cite{shervashidze2011weisfeiler}, and relate GNNs to the classic graph kernel methods for graph classification.

Node classification, on the other hand, is still an ill-posed problem in representation learning on graphs. 
Although identification of node classes often leverages their features, a more challenging and important scenario is to incorporate the graph structure for classification. 
Recent efforts in Graph Convolutional Networks (GCNs)~\cite{kipf2017semi} have made great progress on node classification. In particular, these efforts broadly follow a recursive neighborhood aggregation scheme to capture structural information, where each node aggregates feature vectors of its neighbors to compute its new features~\cite{abu2018n,xu2018representation,zhang2018graph}. Empirically, these GCNs have achieved the state-of-the-art performance on node classification. However, the design of new GCNs is mostly based on empirical intuition, heuristics, and experimental trial-and-error.

In this paper, we propose a novel theoretical framework leveraging kernel methods for node classification. Motivated by graph kernels, our key idea is to decouple the kernel function so that it can be learned driven by the node class labels on the graph. Meanwhile, its validity and expressive power are guaranteed. To be specific, this paper makes the following \emph{contributions}:

\begin{itemize}[leftmargin=*]
    \item We propose a learnable kernel-based framework for node classification. The kernel function is decoupled into a feature mapping function and a base kernel to ensure that it is valid as well as learnable. Then we present a data-driven similarity metric and its corresponding learning criteria for efficient kernel training. The implementation of each component is extensively discussed. An overview of our framework is shown in Fig.~\ref{fig:overview}.
    \item We demonstrate the validity of our learnable kernel function. More importantly, we theoretically show that our formulation is powerful enough to express any valid positive semidefinite kernels.
    \item A novel feature aggregation mechanism for learning node representations is derived from the perspective of kernel smoothing. Compared with GCNs, our model captures the structural information of a node by aggregation in a single step, other than a recursive manner, thus is more efficient.
    \item We discuss the close connection between the proposed approach and GCNs. We also show that our method is flexible and complementary to GCNs and their variants but more powerful, and can be leveraged as a general framework for future work.
\end{itemize}

\begin{figure}[t]
	\centering
	\includegraphics[width=\linewidth]{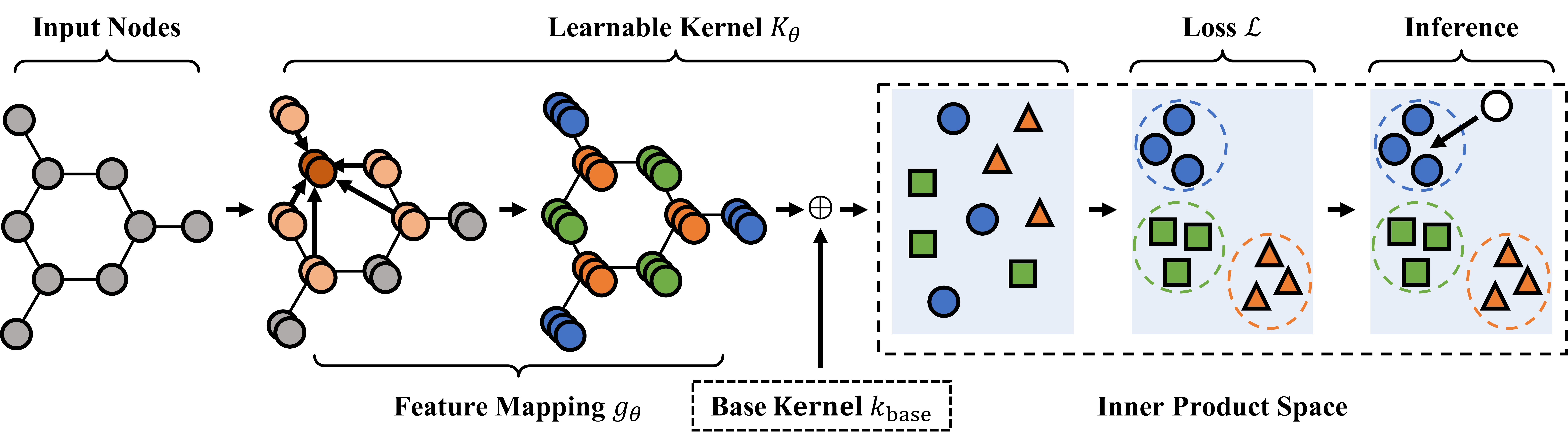}
	\caption{Overview of our kernel-based framework.}
	\label{fig:overview}
\end{figure} 
\section{Related Work}
\label{sec:relatedwork}

\textbf{Graph Kernels.} Graph kernels are kernels defined on graphs to capture the graph similarity, which can be used in kernel methods for graph classification. Many graph kernels are instances of the family of convolutional kernels~\cite{haussler1999convolution}. Some of them measure the similarity between walks or paths on graphs~\cite{borgwardt2005shortest,vishwanathan2010graph}. Other popular kernels are designed based on limited-sized substructures~\cite{horvath2004cyclic,shervashidze2009efficient,shervashidze2009fast,shervashidze2011weisfeiler}. Most graph kernels are employed in models which have learnable components, but the kernels themselves are hand-crafted and motivated by graph theory. Some learnable graph kernels have been proposed recently, such as Deep Graph Kernels~\cite{yanardag2015deep} and Graph Matching Networks~\cite{li2019graph}. Compared to these approaches, our method targets at learning kernels for node representation learning.


\textbf{Node Representation Learning.}  Conventional methods for learning node representations largely focus on matrix factorization. They directly adopt classic techniques for dimension reduction~\cite{ahmed2013distributed,belkin2002laplacian}. Other methods are derived from the random walk algorithm~\cite{mikolov2013distributed,perozzi2014deepwalk} or sub-graph structures~\cite{grover2016node2vec,tang2015line,yang2016revisiting,ribeiro2017struc2vec}. Recently, Graph Convolutional Networks (GCNs) have emerged as an effective class of models for learning representations of graph structured data. They were introduced in~\cite{kipf2017semi}, which consist of an iterative process aggregating and transforming representation vectors of its neighboring nodes to capture structural information. Recently, several variants have been proposed, which employ self-attention mechanism~\cite{velivckovic2018graph} or improve network architectures~\cite{xu2018representation,zhang2018graph} to boost the performance. However, most of them are based on empirical intuition and heuristics. 
\section{Preliminaries}
\label{sec:preliminaries}

We begin by summarizing some of the most important concepts about kernel methods as well as representation learning on graphs and, along the way, introduce our notations.

\textbf{Kernel Concepts.} A kernel $K: \mathcal{X} \times \mathcal{X} \mapsto \mathbb{R}$ is a function of two arguments: $K(x,y)$ for $x, y \in \mathcal{X}$. The kernel function $K$ is symmetric, i.e., $K(x, y) = K(y, x)$,
which means it can be interpreted as a measure of similarity. If the Gram matrix $\mathbf{K} \in \mathbb{R}^{N \times N}$ defined by $\mathbf{K}(i, j) = K(x_i, x_j)$ for any $\{x_i\}_{i=1}^{N}$
is \emph{positive semidefinite (p.s.d.)}, then $K$ is a p.s.d. kernel~\cite{murphy2012machine}. If $K(x, y)$ can be represented as $\langle\Psi(x), \Psi(y)\rangle$, where $\Psi: \mathcal{X} \mapsto \mathbb{R}^{D}$ is a feature mapping function, then $K$ is a valid kernel.

\textbf{Graph Kernels.} In the graph space $\mathcal{G}$, we denote a graph as $G = (V, E)$, where $V$ is the set of nodes and $E$ is the edge set of $G$. Given two graphs $G_i = (V_i, E_i)$ and $G_j = (V_j, E_j)$ in $\mathcal{G}$, the graph kernel $K_G(G_i, G_j)$ measures the similarity between them. According to the definition in~\cite{scholkopf2001learning}, the kernel $K_G$ must be p.s.d. and symmetric.
The graph kernel $K_G$ between $G_i$ and $G_j$ is defined as:
\begin{equation}
\label{eq_graph_kernel_old}
    K_G(G_i, G_j) = \sum_{v_i \in V_i} \sum_{v_j \in V_j} k_{\text{base}}(f(v_i), f(v_j)),
\end{equation}
where $k_{\text{base}}$ is the base kernel for any pair of nodes in $G_i$ and $G_j$, and $f: V \mapsto \Omega$ is a function to compute the feature vector associated with each node. However, deriving a new p.s.d. graph kernel is a non-trivial task. Previous methods often implement $k_{\text{base}}$ and $f$
as the dot product between hand-crafted graph heuristics~\cite{neuhaus2005self, shervashidze2009fast, borgwardt2005shortest}. There are little learnable parameters in these approaches.

\textbf{Representation Learning on Graphs.} Although graph kernels have been applied to a wide range of applications, most of them depend on hand-crafted heuristics. In contrast, representation learning aims to automatically learn to encode graph structures into low-dimensional embeddings. Formally, given a graph $G=(V,E)$, we follow~\cite{hamilton2017representation} to define representation learning as an encoder-decoder framework, where we minimize the empirical loss $\mathcal{L}$ over a set of training node pairs $\mathcal{D} \subseteq V \times V$:
\begin{equation}
\label{eq_general}
    \mathcal{L} = \sum_{(v_i, v_j) \in \mathcal{D}} \ell(\text{ENC-DEC}(v_i, v_j), s_G(v_i, v_j)).
\end{equation}
Equation~(\ref{eq_general}) has three methodological components: $\text{ENC-DEC}$, $s_G$ and $\ell$. Most of the previous methods on representation learning can be distinguished by how these components are defined. The detailed meaning of each component is explained as follows.
\begin{itemize}[leftmargin=*]
    \item $\text{ENC-DEC}: V \times V \mapsto \mathbb{R}$ is an \emph{encoder-decoder function}. It contains an encoder which projects each node into a $M$-dimensional vector to generate the node embedding. This function contains a number of trainable parameters to be optimized during the training phase. It also includes a decoder function, which reconstructs pairwise similarity measurements from the node embeddings generated by the encoder.
    \item $s_G$ is a \emph{pairwise similarity function} defined over the graph $G$. This function is user-specified, and it is used for measuring the similarity between nodes in $G$.
    \item $\ell: \mathbb{R} \times \mathbb{R} \mapsto \mathbb{R}$ is a \emph{loss function}, which is leveraged to train the model. This function evaluates the quality of the pairwise reconstruction between the estimated value $\text{ENC-DEC}(v_i, v_j)$ and the true value $s_G(v_i, v_j)$.
\end{itemize} 
\section{Proposed Method: Learning Kernels for Node Representation}
\label{sec:method}

Given a graph $G$, as we can see from Eq.~(\ref{eq_general}), the encoder-decoder \text{ENC-DEC} aims to approximate the pairwise similarity function $s_G$, which leads to a natural intuition: we can replace $\text{ENC-DEC}$ with a kernel function $K_\theta$ parameterized by $\theta$ to measure the similarity between nodes in $G$, i.e.,
\begin{equation}
\label{eq_kernel}
    \mathcal{L} = \sum_{(v_i, v_j) \in \mathcal{D}} \ell(K_{\theta}(v_i,v_j), s_G(v_i, v_j)).
\end{equation}
However, there exist two technical challenges: (1) designing a valid p.s.d. kernel which captures the node feature is non-trivial; (2) it is impossible to handcraft a unified kernel to handle all possible graphs with different characteristics~\cite{ramon2003expressivity}. To tackle these issues, we introduce a novel formulation to replace $K_\theta$. Inspired by the graph kernel as defined in Eq.~(\ref{eq_graph_kernel_old}) and the mapping kernel framework~\cite{shin2008generalization}, our key idea is to decouple $K_\theta$ into two components: a base kernel $k_{\text{base}}$ which is p.s.d. to maintain the validity, and a learnable feature mapping function $g_{\theta}$ to ensure the flexibility of the resulting kernel. Therefore, we rewrite Eq.~(\ref{eq_kernel}) by $K_{\theta}(v_i, v_j) = k_{\text{base}}(g_{\theta}(v_i), g_{\theta}(v_j))$ for $v_i,v_j \in V$ of the graph $G$ to optimize the following objective:
\begin{equation}
\label{eq_kernel_objective}
    \mathcal{L} = \sum_{(v_i, v_j) \in \mathcal{D}} \ell(k_{\text{base}}(g_{\theta}(v_i),g_{\theta}(v_j)), s_G(v_i, v_j)).
\end{equation}
Theorem~\ref{thm1} demonstrates that the proposed formulation, i.e., $K_{\theta}(v_i, v_j) = k_{\text{base}}(g_{\theta}(v_i), g_{\theta}(v_j))$, is still a valid p.s.d. kernel for any feature mapping function $g_{\theta}$ parameterized by $\theta$.

\begin{theorem}
\label{thm1}
Let $g_{\theta}: V \mapsto \mathbb{R}^M$ be a function which maps nodes (or their corresponding features) to a M-dimensional Euclidean space. Let $k_{\text{base}}: \mathbb{R}^M \times \mathbb{R}^M \mapsto \mathbb{R}$ be any valid p.s.d. kernel. Then, $K_{\theta}(v_i, v_j) = k_{\text{base}}(g_{\theta}(v_i), g_{\theta}(v_j))$ is a valid p.s.d. kernel.
\end{theorem}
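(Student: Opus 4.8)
The plan is to verify directly the two defining properties of a p.s.d.\ kernel from the excerpt's own definitions---symmetry together with positive semidefiniteness of every finite Gram matrix---and, as a by-product, to exhibit an explicit feature map so that $K_\theta$ is also ``valid'' in the sense used in the Preliminaries. The whole proof is essentially the observation that composing a kernel on $\mathbb{R}^M$ with an \emph{arbitrary} map $g_\theta$ into $\mathbb{R}^M$ cannot destroy these properties.

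First I would unpack the hypothesis that $k_{\text{base}}$ is a valid p.s.d.\ kernel: by the definition given in the excerpt there is a feature mapping $\Psi : \mathbb{R}^M \mapsto \mathbb{R}^D$ with $k_{\text{base}}(x,y) = \langle \Psi(x), \Psi(y) \rangle$ for all $x,y \in \mathbb{R}^M$. Composing with $g_\theta$, define $\Phi_\theta := \Psi \circ g_\theta : V \mapsto \mathbb{R}^D$. Then for any $v_i, v_j \in V$,
\begin{equation*}
K_\theta(v_i, v_j) = k_{\text{base}}(g_\theta(v_i), g_\theta(v_j)) = \langle \Psi(g_\theta(v_i)), \Psi(g_\theta(v_j)) \rangle = \langle \Phi_\theta(v_i), \Phi_\theta(v_j) \rangle,
\end{equation*}
which is exactly the form required for $K_\theta$ to be a valid kernel, now with feature map $\Phi_\theta$.

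Next, for positive semidefiniteness I would fix an arbitrary finite collection of nodes $\{v_1, \dots, v_N\} \subseteq V$ and an arbitrary coefficient vector $c = (c_1, \dots, c_N) \in \mathbb{R}^N$, and examine the quadratic form of the Gram matrix $\mathbf{K}_\theta$ of $K_\theta$. Setting $x_i := g_\theta(v_i) \in \mathbb{R}^M$, we obtain
\begin{equation*}
\sum_{i=1}^{N} \sum_{j=1}^{N} c_i c_j\, K_\theta(v_i, v_j) = \sum_{i=1}^{N} \sum_{j=1}^{N} c_i c_j\, k_{\text{base}}(x_i, x_j) = \Big\| \sum_{i=1}^{N} c_i\, \Phi_\theta(v_i) \Big\|^2 \ge 0,
\end{equation*}
where the middle equality uses the feature-map representation above and the final inequality is just nonnegativity of a squared norm; alternatively, the first sum is nonnegative directly because $k_{\text{base}}$ is assumed p.s.d.\ on all tuples of points in $\mathbb{R}^M$, in particular on $x_1,\dots,x_N$. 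Symmetry $K_\theta(v_i, v_j) = K_\theta(v_j, v_i)$ is inherited verbatim from the symmetry of $k_{\text{base}}$. Hence $\mathbf{K}_\theta$ is symmetric and p.s.d.\ for every finite node set, so $K_\theta$ is a p.s.d.\ kernel.

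The argument carries no real obstacle: the only subtlety worth flagging is that the images $x_i = g_\theta(v_i)$ need not be distinct, but this is harmless, since the p.s.d.\ property of $k_{\text{base}}$ is by definition required to hold for arbitrary, possibly repeated, tuples of arguments in $\mathbb{R}^M$. I would also note that $g_\theta$ enters only as a set-theoretic map---no continuity, differentiability, or structural assumptions on $\theta$ are used---which is precisely why the conclusion holds for \emph{any} feature mapping $g_\theta$ parameterized by $\theta$, as claimed.
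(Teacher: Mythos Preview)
Your proof is correct and follows essentially the same route as the paper: both arguments compose the feature map of $k_{\text{base}}$ with $g_\theta$ to exhibit an explicit feature map for $K_\theta$, which immediately yields validity. Your version is slightly more detailed in that you additionally spell out symmetry and the Gram-matrix quadratic form, but this is just an explicit unpacking of what the inner-product representation already guarantees.
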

\begin{proof}
    Let $\Phi$ be the corresponding feature mapping function of the p.s.d. kernel $k_{\text{base}}$. Then, we have $k_{\text{base}}(z_i, z_j) = \langle\Phi(z_i), \Phi(z_j)\rangle$, where $z_i, z_j \in \mathbb{R}^{M}$.
    Substitute $g_{\theta}(v_i), g_{\theta}(v_j)$ for $z_i, z_j$, and we have $k_{\text{base}}(g_{\theta}(v_i), g_{\theta}(v_j)) = \langle\Phi(g_{\theta}(v_i)), \Phi(g_{\theta}(v_j))\rangle$.
    Write the new feature mapping $\Psi(v)$ as $\Psi(v) = \Phi(g_{\theta}(v))$, and we immediately have that $k_{\text{base}}(g_{\theta}(v_i), g_{\theta}(v_j)) = \langle\Psi(v_i), \Psi(v_j)\rangle$. Hence, $k_{\text{base}}(g_{\theta}(v_i), g_{\theta}(v_j))$ is a valid p.s.d. kernel.
\end{proof}

A natural follow-up question is whether our proposed formulation, in principle, is powerful enough to express any valid p.s.d. kernels? Our answer, in Theorem~\ref{thm2}, is yes: if the base kernel has an invertible feature mapping function, then the resulting kernel is able to model any valid p.s.d. kernels.

\begin{theorem}
\label{thm2}
Let $K(v_i, v_j)$ be any valid p.s.d. kernel for node pairs $(v_i, v_j) \in V \times V$. Let $k_{\text{base}}: \mathbb{R}^M \times \mathbb{R}^M \mapsto \mathbb{R}$ be a p.s.d. kernel which has an invertible feature mapping function $\Phi$. Then there exists a feature mapping function $g_{\theta}: V \mapsto \mathbb{R}^M$, such that $K(v_i, v_j) = k_{\text{base}}(g_{\theta}(v_i), g_{\theta}(v_j))$.
\end{theorem}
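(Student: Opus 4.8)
The plan is to build the required map $g_{\theta}$ as an explicit composition: first send a node to the feature vector realizing the target kernel $K$, then pull it back through $\Phi^{-1}$ into the source space $\mathbb{R}^M$ of $k_{\text{base}}$. Concretely, I will set $g_{\theta} = \Phi^{-1} \circ \Psi$, where $\Psi$ is a feature mapping of $K$, and then verify the identity $k_{\text{base}}(g_{\theta}(v_i), g_{\theta}(v_j)) = K(v_i, v_j)$ by a direct cancellation $\Phi \circ \Phi^{-1} = \mathrm{id}$.

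First I would invoke validity of $K$. Since $K$ is a valid p.s.d. kernel on $V \times V$ and $V$ is finite with $N = |V|$ nodes, its Gram matrix $\mathbf{K} \in \mathbb{R}^{N \times N}$ is symmetric p.s.d. and hence factors as $\mathbf{K} = U U^{\top}$ with $U \in \mathbb{R}^{N \times r}$, $r = \operatorname{rank}(\mathbf{K}) \le N$. The rows of $U$ define a feature map $\Psi$ with $K(v_i, v_j) = \langle \Psi(v_i), \Psi(v_j)\rangle$. Provided $M \ge r$, I can pad these feature vectors with zeros and regard $\Psi$ as a map $V \mapsto \mathbb{R}^M$ without altering any inner product, so that $\Psi(V) \subseteq \mathbb{R}^M$.

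Next I would use the hypothesis on $k_{\text{base}}$: it is p.s.d. with an invertible feature mapping $\Phi : \mathbb{R}^M \mapsto \mathbb{R}^M$, so $k_{\text{base}}(z_i, z_j) = \langle \Phi(z_i), \Phi(z_j)\rangle$ and $\Phi^{-1}$ is defined on all of $\mathbb{R}^M$ — in particular on the finite set $\Psi(V)$. Defining $g_{\theta} := \Phi^{-1} \circ \Psi : V \mapsto \mathbb{R}^M$, for any $v_i, v_j \in V$ we get
\[
k_{\text{base}}(g_{\theta}(v_i), g_{\theta}(v_j)) = \bigl\langle \Phi(\Phi^{-1}(\Psi(v_i))), \Phi(\Phi^{-1}(\Psi(v_j)))\bigr\rangle = \langle \Psi(v_i), \Psi(v_j)\rangle = K(v_i, v_j),
\]
which is exactly the claimed equality; taking $\theta$ to be any parameters realizing this $g_{\theta}$ finishes the argument.

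The step I expect to be the real obstacle is the dimension bookkeeping rather than the cancellation: the intrinsic feature space of an arbitrary $K$ is not a priori $\mathbb{R}^M$, so one must argue (via finiteness of $V$, or more generally separability together with $M$ being large enough — at least the rank of the Gram matrix of $K$) that $\Psi$ may be taken to land in $\mathbb{R}^M$. One should also be explicit that \emph{invertible} for $\Phi$ is meant as a bijection of $\mathbb{R}^M$, so that $\Phi^{-1}$ exists on precisely the points $\Psi(v)$ that are needed. Once these conventions are pinned down, the verification is a one-line computation.
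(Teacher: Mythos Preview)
Your proof is correct and follows essentially the same route as the paper: both define $g_{\theta} = \Phi^{-1}\circ\Psi$ and verify the identity by the cancellation $\Phi\circ\Phi^{-1}=\mathrm{id}$. You are simply more careful than the paper about the dimension bookkeeping (why $\Psi$ can be taken to land in $\mathbb{R}^M$ and why $\Phi^{-1}$ is defined where needed), which the paper leaves implicit.
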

\begin{proof}
Let $\Psi$ be the corresponding feature mapping function of the p.s.d. kernel $K$, and then we have $K(v_i, v_j) = \langle\Psi(v_i), \Psi(v_j)\rangle$. Similarly, for $z_i, z_j \in \mathbb{R}^M$, we have $k_{\text{base}}(z_i, z_j) = \langle\Phi(z_i), \Phi(z_j)\rangle$. Substitute $g_{\theta}(v)$ for $z$, and then it is easy to see that $g_{\theta}(v) = (\Phi^{-1} \circ \Psi)(v)$ is the desired feature mapping function when $\Phi^{-1}$ exists.
\end{proof}

\subsection{Implementation and Learning Criteria}

Theorems~\ref{thm1}~and~\ref{thm2} have demonstrated the validity and power of the proposed formulation in Eq.~(\ref{eq_kernel_objective}). In this section, we discuss how to implement and learn $g_{\theta}$, $k_{\text{base}}$, $s_G$ and $\ell$, respectively.

\textbf{Implementation of the Feature Mapping Function} $g_\theta$.
\label{sec:feature_mapping}
The function $g_\theta$ aims to project the feature vector $x_v$ of each node $v$ into a better space for similarity measurement. Our key idea is that in a graph, connected nodes usually share some similar characteristics, and thus changes between nearby nodes in the latent space of nodes should be smooth. Inspired by the concept of kernel smoothing, we consider $g_\theta$ as a \emph{feature smoother} which maps $x_v$ into a \emph{smoothed} latent space according to the graph structure. The kernel smoother estimates a function as the weighted average of neighboring observed data. To be specific, given a node $v \in V$, according to Nadaraya-Watson kernel-weighted average~\cite{friedman2001elements}, a feature smoothing function is defined as:
\begin{equation}
\label{eq_g_v}
    g(v) = \frac{\sum_{u\in V}k(u,v)p(u)}{\sum_{u\in V}k(u,v)},
\end{equation}
where $p$ is a mapping function to compute the feature vector of each node, and here we let $p(v) = x_v$; $k$ is a pre-defined kernel function to capture pairwise relations between nodes. Note that we omit $\theta$ for $g$ here since there are no learnable parameters in Eq.~(\ref{eq_g_v}). In the context of graphs, the natural choice of computing $k$ is to follow the graph structure, i.e., the structural information within the node's $h$-hop neighborhood.

To compute $g$, we let $\mathbf{A}$ be the adjacent matrix of the given graph $G$ and $\mathbf{I}$ be the identity matrix with the same size. We notice that $\mathbf{I} + \mathbf{D}^{-\frac{1}{2}}\mathbf{A}\mathbf{D}^{-\frac{1}{2}}$ is a valid p.s.d. matrix, where $\mathbf{D}(i, i) = \sum_{j}\mathbf{A}(i, j)$. Thus we can employ this matrix to define the kernel function $k$. However, in practice, this matrix would lead to numerical instabilities and exploding or vanishing gradients when used for training deep neural networks. To alleviate this problem, we adopt the \emph{renormalization trick}~\cite{kipf2017semi}: $\mathbf{I} + \mathbf{D}^{-\frac{1}{2}}\mathbf{A}\mathbf{D}^{-\frac{1}{2}} \rightarrow \bar{\mathbf{A}} = \tilde{\mathbf{D}}^{-\frac{1}{2}}\tilde{\mathbf{A}}\tilde{\mathbf{D}}^{-\frac{1}{2}}$, where $\tilde{\mathbf{A}} = \mathbf{A} + \mathbf{I}$ and $\tilde{\mathbf{D}}(i, i) = \sum_{j}\tilde{\mathbf{A}}(i, j)$. Then the $h$-hop neighborhood can be computed directly from the $h$ power of $\bar{\mathbf{A}}$, i.e., $\bar{\mathbf{A}}^h$. And the kernel $k$ for node pairs $v_i, v_j \in V$ is computed as $k(v_i, v_j) = \bar{\mathbf{A}}^h(i, j)$. After collecting the feature vector $x_v$ of each node $v \in V$ into a matrix $\mathbf{X}_V$, we rewrite Eq.~(\ref{eq_g_v}) approximately into its matrix form:
\begin{equation}
\label{eq_g_agg}
    g(V) \approx \bar{\mathbf{A}}^h\mathbf{X}_V.
\end{equation}

Next, we enhance the expressive power of Eq.~(\ref{eq_g_agg}) to model any valid p.s.d. kernels by implementing it with deep neural networks based on the following two aspects. First, we make use of multi-layer perceptrons (MLPs) to model and learn the composite function $\Phi^{-1}\circ\Psi$ in Theorem~\ref{thm2}, thanks to the universal approximation theorem~\cite{hornik1991approximation,hornik1989multilayer}. Second, we add learnable weights to different hops of node neighbors. As a result, our final feature mapping function $g_{\theta}$ is defined as:
\begin{equation}
\label{eq_enc_k_hop}
    g_{\theta}(V) = \left(\sum_{h}\omega_{h}\cdot \left(\bar{\mathbf{A}}^{h} \odot \mathbf{M}^{(h)}\right)\right)\cdot \text{MLP}^{(l)}(\mathbf{X}_V),
\end{equation}
where $\theta$ means the set of parameters in $g_{\theta}$; $\omega_{h}$ is a learnable parameter for the $h$-hop neighborhood of each node $v$; $\odot$ is the Hadamard (element-wise) product; $\mathbf{M}^{(h)}$ is an indicator matrix where $\mathbf{M}^{(h)}(i, j)$ equals to 1 if $v_j$ is a $h$-th hop neighbor of $v_i$ and 0 otherwise. The hyperparameter $l$ controls the number of layers in the MLP.

Equation~(\ref{eq_enc_k_hop}) can be interpreted as a weighted feature aggregation schema around the given node $v$ and its neighbors, which is employed to compute the node representation. It has a close connection with Graph Neural Networks. We leave it in Section~\ref{sec:discussion} for a more detailed discussion.

\textbf{Implementation of the Base Kernel} $k_{\text{base}}$.
\label{sec:base_kernel}
As we have shown in Theorem~\ref{thm2}, in order to model an arbitrary p.s.d. kernel, we require that the corresponding feature mapping function $\Phi$ of the base kernel $k_{\text{base}}$ must be invertible, i.e., $\Phi^{-1}$ exists. An obvious choice would let $\Phi$ be an identity function, then $k_{\text{base}}$ will reduce to the dot product between nodes in the latent space. Since $g_{\theta}$ maps node representations to a \emph{finite} dimensional space, the identity function makes our model directly measure the node similarity in this space. On the other hand, an alternative choice of $k_{\text{base}}$ is the RBF kernel which additionally projects node representations to an \emph{infinite} dimensional latent space before comparison. We compare both implementations in the experiments for further evaluation.

\textbf{Data-Driven Similarity Metric} $s_G$ \textbf{and Criteria} $\ell$. In node classification, each node $v_i \in V$ is associated with a class label $y_i \in Y$. We aim to measure node similarity with respect to their class labels other than hand-designed metrics. Naturally, we define the pairwise similarity $s_{G}$ as:
\begin{equation}
\label{eq_sg}
    s_{G}(v_i, v_j) =
    \begin{cases}
    1& \text{if $y_i = y_j$} \\
    -1& \text{o/w}
    \end{cases}
\end{equation}
However, in practice, it is hard to directly minimize the loss between $K_\theta$ and $s_G$ in Eq.~(\ref{eq_sg}). Instead, we consider a ``soft'' version of $s_G$, where we require that the similarity of node pairs with the same label is greater than those with distinct labels by a margin. Therefore, we train the kernel $K_{\theta}(v_i,v_j) = k_{\text{base}}(g_{\theta}(v_i), g_{\theta}(v_j))$ to minimize the following objective function on {\it triplets}:
\begin{equation}
\label{eq_kernel_classification}
    \mathcal{L}_K = \sum_{(v_i, v_j, v_k) \in \mathcal{T}} \ell(K_{\theta}(v_i,v_j), K_{\theta}(v_i,v_k)),
\end{equation}
where $\mathcal{T} \subseteq V\times V\times V$ is a set of node triplets: $v_i$ is an anchor, and $v_j$ is a positive of the same class as the anchor while $v_k$ is a negative of a different class. The loss function $\ell$ is defined as:
\begin{equation}
\label{eq_loss_classification}
    \ell(K_{\theta}(v_i,v_j), K_{\theta}(v_i,v_k)) = [K_{\theta}(v_i,v_k) - K_{\theta}(v_i,v_j) + \alpha]_{+}.
\end{equation}
It ensures that given two positive nodes of the same class and one negative node, the kernel value of the negative should be farther away than the one of the positive by the margin $\alpha$. Here, we present Theorem~\ref{thm3} and its proof to show that minimizing Eq.~(\ref{eq_kernel_classification}) leads to $K_{\theta}=s_{G}$.

\begin{theorem}
\label{thm3}
If $\left | K_{\theta}(v_i, v_j) \right | \leq 1$ for any $v_i, v_j \in V$, minimizing Eq.~(\ref{eq_kernel_classification}) with $\alpha = 2$ yields $K_{\theta} = s_{G}$.
\end{theorem}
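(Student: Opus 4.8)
\textbf{Proof proposal for Theorem~\ref{thm3}.}
The plan is to use the fact that every summand of $\mathcal{L}_K$ in Eq.~(\ref{eq_kernel_classification}) is a hinge term $[\,\cdot\,]_+$ and hence nonnegative, so $\mathcal{L}_K \ge 0$ with global minimum $0$, attained precisely when each per-triplet loss in Eq.~(\ref{eq_loss_classification}) vanishes. Thus I would first reduce the claim to a pointwise characterization: for each $(v_i, v_j, v_k) \in \mathcal{T}$, determine exactly when $[K_{\theta}(v_i, v_k) - K_{\theta}(v_i, v_j) + 2]_+ = 0$, and then show that the collection of these conditions is equivalent to $K_{\theta} = s_G$.

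The key step is a squeeze argument using the hypothesis $|K_{\theta}(\cdot,\cdot)| \le 1$. The term above is zero if and only if $K_{\theta}(v_i, v_j) - K_{\theta}(v_i, v_k) \ge 2$. But $K_{\theta}(v_i, v_j) \le 1$ and $-K_{\theta}(v_i, v_k) \le 1$, so the left-hand side is always $\le 2$; equality must therefore hold, which forces $K_{\theta}(v_i, v_j) = 1$ and $K_{\theta}(v_i, v_k) = -1$ simultaneously. This is exactly where the choice $\alpha = 2$ enters: it equals the diameter of the range $[-1,1]$, hence it is the largest margin for which zero loss remains feasible and the unique value that pins down both kernel entries rather than merely their relative order (a smaller $\alpha$ would only enforce an ordering).

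Finally I would aggregate over all triplets in $\mathcal{T}$. Since each anchor $v_i$ is paired with every same-class node as a positive $v_j$ and with every different-class node as a negative $v_k$, driving $\mathcal{L}_K$ to its lower bound $0$ yields $K_{\theta}(v_i, v_j) = 1$ whenever $y_i = y_j$ and $K_{\theta}(v_i, v_j) = -1$ whenever $y_i \ne y_j$, which is precisely the definition of $s_G$ in Eq.~(\ref{eq_sg}). Hence $K_{\theta} = s_G$.

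I expect the only delicate points to be bookkeeping rather than mathematics. First, one must read ``minimizing'' as ``attaining the lower bound $\mathcal{L}_K = 0$'', which tacitly assumes that the family $k_{\text{base}}(g_{\theta}(\cdot), g_{\theta}(\cdot))$ is expressive enough to realize $s_G$; note that $s_G$ itself need not be p.s.d. once there are three or more classes, so in practice only an approximation is reached, and I would flag this caveat explicitly. Second, the conclusion on a given pair $(v_i,v_j)$ requires that this pair actually appears inside some triplet of $\mathcal{T}$ with the correct sign role, so one should either assume $\mathcal{T}$ enumerates all same-label and different-label pairs or phrase the conclusion as holding on the pairs covered by $\mathcal{T}$.
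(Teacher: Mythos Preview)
Your proposal is correct and follows essentially the same route as the paper: reduce ``minimizing $\mathcal{L}_K$'' to the per-triplet condition $K_{\theta}(v_i,v_k)+2\le K_{\theta}(v_i,v_j)$, then apply the squeeze from $|K_{\theta}|\le 1$ to force $K_{\theta}(v_i,v_j)=1$ and $K_{\theta}(v_i,v_k)=-1$. Your additional caveats about expressivity of $K_\theta$ and coverage of $\mathcal{T}$ are well taken and in fact make explicit assumptions that the paper's proof leaves implicit.
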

\begin{proof}
Let $(v_i, v_j, v_k)$ be all triplets satisfying $y_i = y_j$, $y_i \neq y_k$. Suppose that for $\alpha=2$, Eq.~(\ref{eq_loss_classification}) holds for all $(v_i, v_j, v_k)$. It means $K_{\theta}(v_i,v_k)+2\leq K_{\theta}(v_i, v_j)$ for all $(v_i, v_j, v_k)$. As $\left | K_{\theta}(v_i, v_j) \right | \leq 1$, we have $K_{\theta}(v_i,v_k)=-1$ for all $(v_i,v_k)$ and $K_{\theta}(v_i, v_j)=1$ for all $(v_i,v_j)$. Hence, $K_{\theta}=s_{G}$.
\end{proof}

We note that $\left | K_{\theta}(v_i, v_j) \right | \leq 1$ can be simply achieved by letting $k_{\text{base}}$ be the dot product and normalizing all $g_{\theta}$ to the norm ball. In the following sections, the normalized $K_{\theta}$ is denoted by $\bar{K}_\theta$.

\subsection{Inference for Node Classification}
\label{sec:inference}
Once the kernel function $K_{\theta}(v_i, v_j) = k_{\text{base}}(g_{\theta}(v_i), g_{\theta}(v_j))$ has learned how to measure the similarity between nodes, we can leverage the output of the feature mapping function $g_{\theta}$ as the node representation for node classification. In this paper, we introduce the following two classifiers.

\textbf{Nearest Centroid Classifier.} The nearest centroid classifier extends the $k$-nearest neighbors algorithm by assigning to observations the label of the class of training samples whose centroid is closest to the observation. It does not require additional parameters. To be specific, given a testing node $u$, for all nodes $v_i$ with class label $y_i \in Y$ in the training set, we compute the per-class average similarity between $u$ and $v_i$: $\mu_y = \frac{1}{\lvert V_y \rvert}\sum_{v_i \in V_y}\bar{K}_\theta(u, v_i)$, where $V_y$ is the set of nodes belonging to class $y \in Y$. Then the class assigned to the testing node $u$:
\begin{equation}
    y^* = {\argmax}_{{y \in Y}}\mu_y.
\end{equation}

\textbf{Softmax Classifier.} The idea of the softmax classifier is to reuse the ground truth labels of nodes for training the classifier, so that it can be directly employed for inference. To do this, we add
the softmax activation $\sigma$ after $g_\theta(v_i)$ to minimize the following objective:
\begin{equation}
\label{eq_loss_neural}
    \mathcal{L}_Y = -\sum_{v_i \in V}q(y_i)\log(\sigma(g_\theta(v_i))),
\end{equation}
where $q(y_i)$ is the one-hot ground truth vector. Note that Eq.~(\ref{eq_loss_neural}) is optimized together with Eq.~(\ref{eq_kernel_classification}) in an end-to-end manner. Let $\Psi$ denote the corresponding feature mapping function of $K_\theta$, then we have $K_\theta(v_i, v_j) = \langle\Psi(v_i), \Psi(v_j)\rangle = k_{\text{base}}(g_{\theta}(v_i), g_{\theta}(v_j))$. In this case, we use the node feature produced by $\Psi$ for classification since $\Psi$ projects node features into the dot-product space which is a natural metric for similarity comparison. To this end, $k_{\text{base}}$ is fixed to be the identity function for the softmax classifier, so that we have $\langle\Psi(v_i), \Psi(v_j)\rangle = \langle g_{\theta}(v_i), g_{\theta}(v_j) \rangle$ and thus $\Psi(v_i) = g_{\theta}(v_i)$.

\section{Discussion}
\label{sec:discussion}
Our feature mapping function $g_\theta$ proposed in Eq.~(\ref{eq_enc_k_hop}) has a close connection with Graph Convolutional Networks (GCNs)~\cite{kipf2017semi} in the way of capturing node latent representations. In GCNs and most of their variants, each layer leverages the following aggregation rule:
\begin{equation}
\label{eq_enc_gcn_layer}
    \mathbf{H}^{(l+1)} = \rho\left(\bar{\mathbf{A}}\mathbf{H}^{(l)}\mathbf{W}^{(l)}\right),
\end{equation}
where $\mathbf{W}^{(l)}$ is a layer-specific trainable weighting matrix; $\rho$ denotes an activation function; $\mathbf{H}^{(l)} \in \mathbb{R}^{N \times D}$ denotes the node features in the $l$-th layer, and $\mathbf{H}^{0} = \mathbf{X}$. Through stacking multiple layers, GCNs aggregate the features for each node from its $L$-hop neighbors recursively, where $L$ is the network depth. Compared with the proposed $g_\theta$, GCNs actually interleave two basic operations of $g_\theta$: feature transformation and Nadaraya-Watson kernel-weighted average, and repeat them recursively.

We contrast our approach with GCNs in terms of the following aspects. First, our aggregation function is derived from the kernel perspective, which is novel. Second, we show that aggregating features in a recursive manner is inessential. Powerful $h$-hop node representations can be obtained by our model where aggregation  is performed only once. As a result, our approach is more efficient both in storage and time when handling very large graphs, since no intermediate states of the network have to be kept. Third, our model is flexible and complementary to GCNs: our function $g_\theta$ can be directly replaced by GCNs and other variants, which can be exploited for future work.

\textbf{Time and Space Complexity.} We assume the number of features $F$ is fixed for all layers and both GCNs and our method have $L \geq 2$ layers. We count matrix multiplications as in~\cite{chiang2019cluster}. GCN's time complexity is $\mathcal{O}(L{\|\bar{\mathbf{A}}\|}_0F + L|V|F^2)$, where ${\|\bar{\mathbf{A}}\|}_0$ is the number of nonzeros of $\bar{\mathbf{A}}$ and $|V|$ is the number of nodes in the graph. While ours is $\mathcal{O}({\|\bar{\mathbf{A}}^{h}\|}_0F + L|V|F^2)$, since we do not aggregate features recursively. Obviously, ${\|\bar{\mathbf{A}}^{h}\|}_0$ is constant but $ L{\|\bar{\mathbf{A}}\|}_0$ is linear to $L$. For space complexity, GCNs have to store all the feature matrices for recursive aggregation which needs $\mathcal{O}(L|V|F + LF^2)$ space, where $LF^2$ is for storing trainable parameters of all layers, and thus the first term is linear to $L$. Instead, ours is $\mathcal{O}(|V|F + LF^2)$ where the first term is again constant to $L$. Our experiments indicate that we save 20\% (0.3 ms) time and 15\% space on Cora dataset~\cite{mcvallum2000automating} than GCNs.

\section{Experiments}
\label{sec:experiments}

We evaluate the proposed kernel-based approach on three benchmark datasets: Cora~\cite{mcvallum2000automating}, Citeseer~\cite{giles1998citeSeer} and Pubmed~\cite{sen2008collective}. They are citation networks, where the task of node classification is to classify academic papers of the network (graph) into different subjects. These datasets contain bag-of-words features for each document (node) and citation links between documents.

We compare our approach to five state-of-the-art methods: GCN~\cite{kipf2017semi}, GAT~\cite{velivckovic2018graph}, FastGCN~\cite{chen2018fastgcn}, JK~\cite{xu2018representation} and KLED~\cite{fouss2006experimental}. KLED is a kernel-based method, while the others are based on deep neural networks. We test all methods in the supervised learning scenario, where all data in the training set are used for training. We evaluate the proposed method in two different experimental settings according to FastGCN~\cite{chen2018fastgcn} and JK~\cite{xu2018representation}, respectively. The statistics of the datasets together with their data split settings (i.e., the number of samples contained in the training, validation and testing sets, respectively) are summarized in Table~\ref{tb:dataset}. Note that there are more training samples in the data split of JK~\cite{xu2018representation} than FastGCN~\cite{chen2018fastgcn}. We report the average means and standard deviations of node classification accuracy which are computed from ten runs as the evaluation metrics.

\begin{table}[ht]
\caption{Overview of the three evaluation datasets under two different data split settings.}
\begin{center}
\setlength\tabcolsep{2.0pt}
\begin{tabular}{lcccccc}
\toprule
Dataset & Nodes & Edges & Classes & Features & Data split of FastGCN~\cite{chen2018fastgcn} & Data split of JK~\cite{xu2018representation}\\
\midrule
Cora~\cite{mcvallum2000automating} & 2,708 & 5,429 & 7 & 1,433 & 1,208 / 500 / 1,000 & 1,624 / 542 / 542 \\
Citeseer~\cite{giles1998citeSeer} & 3,327 & 4,732 & 6 & 3,703 & 1,827 / 500 / 1,000 & 1,997 / 665 / 665 \\
Pubmed~\cite{sen2008collective} & 19,717 & 44,338 & 3 & 500 & 18,217 / 500 / 1,000 & -\\
\bottomrule
\end{tabular}
\end{center}
\label{tb:dataset}
\end{table}



\subsection{Variants of the Proposed Method}
\label{sec:experiments:variants}

As we have shown in Section~\ref{sec:base_kernel}, there are alternative choices to implement each component of our framework. In this section, we summarize all the variants of our method employed for evaluation.

\textbf{Choices of the Feature Mapping Function} $g$. We implement the feature mapping function $g_{\theta}$ according to Eq.~(\ref{eq_enc_k_hop}). In addition, we also choose GCN and GAT as the alternative implementations of $g_{\theta}$ for comparison, and denote them by $g_{\text{GCN}}$ and $g_{\text{GAT}}$, respectively.

\textbf{Choices of the Base Kernel} $k_{\text{base}}$. The base kernel $k_{\text{base}}$ has two different implementations: the dot product which is denoted by $k_{\langle\cdot,\cdot\rangle}$, and the RBF kernel which is denoted by $k_{\text{RBF}}$. Note that when the softmax classifier is employed, we set the base kernel to be $k_{\langle\cdot,\cdot\rangle}$.

\textbf{Choices of the Loss} $\mathcal{L}$ \textbf{and Classifier} $\mathcal{C}$. We consider the following three combinations of the loss function and classifier. (1) $\mathcal{L}_K$ in Eq.~(\ref{eq_kernel_classification}) is optimized, and the nearest-centroid classifier $\mathcal{C}_K$ is employed for classification. This combination aims to evaluate the effectiveness of the learned kernel. (2) $\mathcal{L}_Y$ in Eq.~(\ref{eq_loss_neural}) is optimized, and the softmax classifier $\mathcal{C}_Y$ is employed for classification. This combination is used in a baseline without kernel methods. (3) Both Eq.~(\ref{eq_kernel_classification}) and Eq.~(\ref{eq_loss_neural}) are optimized, and we denote this loss by $\mathcal{L}_{K+Y}$. The softmax classifier $\mathcal{C}_Y$ is employed for classification. This combination aims to evaluate how the learned kernel improves the baseline method.

In the experiments, we use $\mathcal{K}$ to denote kernel-based variants and $\mathcal{N}$ to denote ones without the kernel function. All these variants are implemented by MLPs with two layers. Due to the space limitation, we ask the readers to refer to the supplementary material for implementation details.

\subsection{Results of Node Classification}


The means and standard deviations of node classification accuracy (\%) following the setting of FastGCN~\cite{chen2018fastgcn} are organized in Table~\ref{tb:results}. Our variant of $\mathcal{K}_3$ sets the new state of the art on all datasets. And on Pubmed dataset, all our variants improve previous methods by a large margin. It proves the effectiveness of employing kernel methods for node classification, especially on datasets with large graphs. Interestingly, our non-kernel baseline $\mathcal{N}_1$ even achieves the state-of-the-art performance, which shows that our feature mapping function can capture more flexible structural information than previous GCN-based approaches. For the choice of the base kernel, we can find that $\mathcal{K}_2$ outperforms $\mathcal{K}_1$ on two large datasets: Citeseer and Pubmed. We conjecture that when handling complex datasets, the non-linear kernel, e.g., the RBF kernel, is a better choice than the liner kernel.

To evaluate the performance of our feature mapping function, we report the results of two variants $\mathcal{K}_1^*$ and $\mathcal{K}_2^*$ in Table~\ref{tb:results}. They utilize GCN and GAT as the feature mapping function respectively. As expected, our $\mathcal{K}_1$ outperforms $\mathcal{K}_1^*$ and $\mathcal{K}_2^*$ among most datasets. This demonstrates that the recursive aggregation schema of GCNs is inessential, since the proposed $g_\theta$ aggregates features only in a single step, which is still powerful enough for node classification. On the other hand, it is also observed that both $\mathcal{K}_1^*$ and $\mathcal{K}_2^*$ outperform their original non-kernel based implementations, which shows that learning with kernels yields better node representations.

Table~\ref{tb:results-JK} shows the results following the setting of JK~\cite{xu2018representation}. Note that we do not evaluate on Pubmed in this setup since its corresponding data split for training and evaluation is not provided by~\cite{xu2018representation}. As expected, our method achieves the best performance among all datasets, which is consistent with the results in Table~\ref{tb:results}. For Cora, the improvement of our method is not so significant. We conjecture that the results in Table~\ref{tb:results-JK} involve more training data due to different data splits, which narrows the performance gap between different methods on datasets with small graphs, such as Cora.

\begin{table}[ht]
\caption{Accuracy (\%) of node classification following the setting of FastGCN~\cite{chen2018fastgcn}.}
\label{tb:results}
\centering
\begin{tabular}{lccc}
\toprule
Method & Cora~\cite{mcvallum2000automating} & Citeseer~\cite{giles1998citeSeer} & Pubmed~\cite{sen2008collective}\\
\midrule
KLED~\cite{fouss2006experimental} & 82.3 & - & 82.3 \\
GCN~\cite{kipf2017semi} & 86.0 & 77.2 & 86.5 \\
GAT~\cite{velivckovic2018graph} & 85.6 & 76.9 & 86.2 \\
FastGCN~\cite{chen2018fastgcn} & 85.0 & 77.6 & 88.0 \\
\midrule
$\mathcal{K}_1 = \{k_{\langle\cdot,\cdot\rangle}, g_\theta, \mathcal{L}_K, \mathcal{C}_K\}$ & 86.68 $\pm$ 0.17 & 77.92 $\pm$ 0.25 & 89.22 $\pm$ 0.17 \\
$\mathcal{K}_2 = \{k_{\text{RBF}}, g_\theta, \mathcal{L}_K, \mathcal{C}_K\}$ & 86.12 $\pm$ 0.05 & 78.68 $\pm$ 0.38 & 89.36 $\pm$ 0.21 \\
$\mathcal{K}_3 = \{k_{\langle\cdot,\cdot\rangle}, g_\theta, \mathcal{L}_{K+Y}, \mathcal{C}_Y\}$ & {\bf 88.40 $\pm$ 0.24} & {\bf 80.28 $\pm$ 0.03} & {\bf 89.42 $\pm$ 0.01} \\
$\mathcal{N}_1 = \{g_\theta, \mathcal{L}_Y, \mathcal{C}_Y\}$ & 87.56 $\pm$ 0.14 & 79.80 $\pm$ 0.03 & 89.24 $\pm$ 0.14 \\
\midrule
$\mathcal{K}_1^* = \{k_{\langle\cdot,\cdot\rangle}, g_{\text{GCN}}, \mathcal{L}_K, \mathcal{C}_K\}$ & 87.04 $\pm$ 0.09 & 77.12 $\pm$ 0.23 & 87.84 $\pm$ 0.12 \\
$\mathcal{K}_2^* = \{k_{\langle\cdot,\cdot\rangle}, g_{\text{GAT}}, \mathcal{L}_K, \mathcal{C}_K\}$ & 86.10 $\pm$ 0.33 & 77.92 $\pm$ 0.19 & - \\
\bottomrule
\end{tabular}
\end{table}

\begin{table}[ht]
\caption{Accuracy (\%) of node classification following the setting of JK~\cite{xu2018representation}.}
\label{tb:results-JK}
\centering
\begin{tabular}{lcc}
\toprule
Method & Cora~\cite{mcvallum2000automating} & Citeseer~\cite{giles1998citeSeer} \\
\midrule
GCN~\cite{kipf2017semi} & 88.20 $\pm$ 0.70 & 77.30 $\pm$ 1.30  \\
GAT~\cite{velivckovic2018graph} & 87.70 $\pm$ 0.30 & 76.20 $\pm$ 0.80 \\
JK-Concat~\cite{xu2018representation} & 89.10 $\pm$ 1.10 & 78.30 $\pm$ 0.80 \\
\midrule
$\mathcal{K}_3 = \{k_{\langle\cdot,\cdot\rangle}, g_\theta, \mathcal{L}_{K+Y}, \mathcal{C}_Y\}$ & {\bf 89.24 $\pm$ 0.31} & {\bf 80.78 $\pm$ 0.28} \\
\bottomrule
\end{tabular}
\end{table}


\subsection{Ablation Study on Node Feature Aggregation Schema}

In Table~\ref{tb:ablation}, we implement three variants of $\mathcal{K}_3$ (2-hop and 2-layer with $\omega_h$ by default) to evaluate the proposed node feature aggregation schema. We answer the following three questions. (1) {\it How does performance change with fewer (or more) hops?} We change the number of hops from 1 to 3, and the performance improves if it is larger, which shows capturing long-range structures of nodes is important. (2) {\it How many layers of MLP are needed?} We show results with different layers ranging from 1 to 3. The best performance is obtained with two layers, while networks overfit the data when more layers are employed. (3) {\it Is it necessary to have a trainable parameter $\omega_h$?} We replace $\omega_h$ with a fixed constant $c^h$, where $c \in (0, 1]$. We can see larger $c$ improves the performance. However, all results are worse than learning a weighting parameter $\omega_h$, which shows the importance of it.

\begin{table}[ht]
\caption{Results of accuracy (\%) with different settings of the aggregation schema.}
\label{tb:ablation}
\centering
\begin{tabular}{lccc}
\toprule
Variants of $\mathcal{K}_3$ & Cora~\cite{mcvallum2000automating} & Citeseer~\cite{giles1998citeSeer} & Pubmed~\cite{sen2008collective}\\
\midrule
Default & {\bf 88.40 $\pm$ 0.24} & {\bf 80.28 $\pm$ 0.03} & 89.42 $\pm$ 0.01\\
\midrule
1-hop & 85.56 $\pm$ 0.02 & 77.73 $\pm$ 0.02& 88.98 $\pm$ 0.01\\
3-hop & 88.25 $\pm$ 0.01 & 80.13 $\pm$ 0.01 & {\bf 89.53 $\pm$ 0.01}\\
\midrule
1-layer & 82.60 $\pm$ 0.01 & 77.63 $\pm$ 0.01 & 85.80 $\pm$ 0.01\\
3-layer & 86.33 $\pm$ 0.04 & 78.53 $\pm$ 0.20 & 89.46 $\pm$ 0.05\\
\midrule
$c = 0.25$ & 69.33 $\pm$ 0.09 & 74.48 $\pm$ 0.03 & 84.68 $\pm$ 0.02\\
$c = 0.50$ & 76.98 $\pm$ 0.10 & 77.47 $\pm$ 0.04 & 86.45 $\pm$ 0.01\\
$c = 0.75$ & 84.25 $\pm$ 0.01 & 77.99 $\pm$ 0.01 & 87.45 $\pm$ 0.01\\
$c = 1.00$ & 87.31 $\pm$ 0.01 & 78.57 $\pm$ 0.01 & 88.68 $\pm$ 0.01\\
\bottomrule
\end{tabular}
\end{table}

\subsection{t-SNE Visualization of Node Embeddings }
We visualize the node embeddings of GCN, GAT and our method on Citeseer with t-SNE. For our method, we use the embedding of $\mathcal{K}_3$ which obtains the best performance. Figure~\ref{fig:tsne} illustrates the results. Compared with other methods, our method produces a more compact clustering result. Specifically our method clusters the ``red'' points tightly, while in the results of GCN and GAT, they are loosely scattered into other clusters. This is caused by the fact that both GCN and GAT minimize the classification loss $\mathcal{L}_Y$, only targeting at accuracy. They tend to learn node embeddings driven by those classes with the majority of nodes. In contrast, $\mathcal{K}_3$ are trained with both $\mathcal{L}_K$ and $\mathcal{L}_Y$. Our kernel-based similarity loss $\mathcal{L}_K$ encourages data within the same class to be close to each other. As a result, the learned feature mapping function $g_{\theta}$ encourages geometrically compact clusters.

\begin{figure}[ht]
	\centering
	\includegraphics[width=0.9\linewidth]{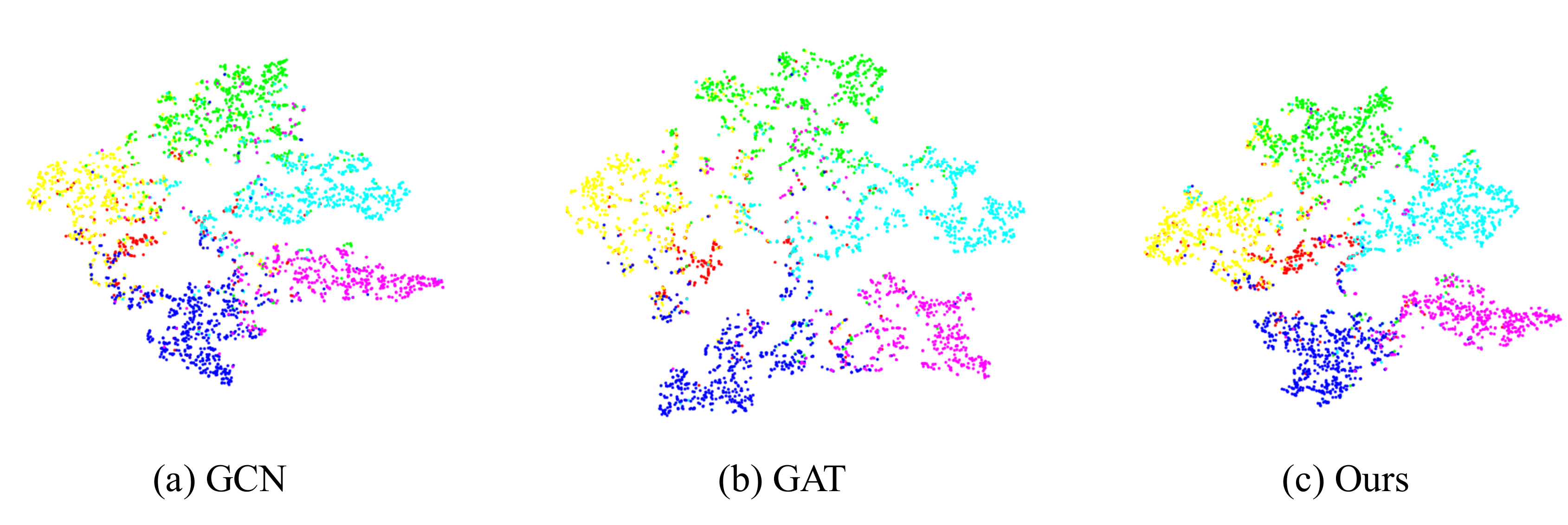}
	\caption{t-SNE visualization of node embeddings on Citeseer dataset.}
	\label{fig:tsne}
\end{figure}

Due to the space limitation, we ask the readers to refer to the supplementary material for more experiment results, such as the results of link prediction and visualization on other datasets. 
\section{Conclusions}

In this paper, we introduce a kernel-based framework for node classification. Motivated by the design of graph kernels, we learn the kernel from ground truth labels by decoupling the kernel function into a base kernel and a learnable feature mapping function. More importantly, we show that our formulation is valid as well as powerful enough to express any p.s.d. kernels. Then the implementation of each component in our approach is extensively discussed. From the perspective of kernel smoothing, we also derive a novel feature mapping function to aggregate features from a node's neighborhood. Furthermore, we show that our formulation is closely connected with GCNs but more powerful. Experiments on standard node classification benchmarks are conducted to evaluated our approach. The results show that our method outperforms the state of the art.

\subsubsection*{Acknowledgments}

This work is funded by ARO-MURI-68985NSMUR and NSF 1763523, 1747778, 1733843, 1703883. 

\begin{appendices}
\section{Supplementary Materials}

\subsection{Implementation Details}

We use different network settings for the combinations of the loss function and inference method in Section~\ref{sec:experiments:variants} of the original paper.
For Variant (1), we choose the output dimension of the first and second layers to be 512 and 128, respectively. We train this combination with 10 epochs on Cora and Citeseer and 100 epochs on Pubmed. 

For GAT~\cite{velivckovic2018graph}, due to its large memory cost, its output dimension of the first and second layers is chosen to be 64 and 8, respectively.

For Variants (2) and (3), the output dimension of the first layer is chosen to be 16. The output dimension of the second layer is the same as the number of node classes. We train this combination 100 epochs for GAT and 200 epochs for other setups.

In Eq.~(\ref{eq_kernel_classification}) of the original paper, we randomly sample 10,000 triplets in each epoch. In Eq.~(\ref{eq_loss_classification}) of the original paper, $\alpha$ is set to be 0.1 for all datasets. All methods are optimized using Adam~\cite{kingma2014adam} with the learning rate of 0.01. We use the best model achieved on the validation set for testing. Each result is reported based on an average over 10 runs.

\subsection{Additional Experimental Results}

\subsubsection{Results of Link Prediction}
In addition to node classification, we also conduct experiments for link prediction to demonstrate the generalizability of the proposed framework in different graph-based tasks. We train the models using an \emph{incomplete} version of the three citation datasets (Cora, Citeseer and Pubmed) according to~\cite{kipf2016variational}: the node features remain but parts of the citation links (edges) are missing. The validation and test sets are constructed following the setup of~\cite{kipf2016variational}.

We choose $k_{\text{base}}$ to be the dot product and set $g_{\theta}$ to be the feature mapping function. Given graph $G=(V,E)$, for $v_i, v_j \in V$, the similarity measure is defined as:
\begin{equation}
\label{eq_sg_link}
    s_{G}(v_i, v_j) =
    \begin{cases}
    1& \text{if $(v_i,v_j) \in E$} \\
    0& \text{o/w}
    \end{cases}
\end{equation}
The feature mapping function $g_{\theta}$ can be learned by minimizing the following objective function in a data-driven manner:
\begin{equation}
\label{eq_link}
    \mathcal{L}_K = \sum_{(v_i, v_j) \in \mathcal{D}} \ell(K_{\theta}(v_i,v_j), s_{G}(v_i,v_j)),
\end{equation}
where $\mathcal{D}$ is the set of training edges, and $\ell$ is the binary cross entropy loss.

Table~\ref{tb:link_results} summarizes the link prediction results of our kernel-based method, the variational graph autoencoder (VGAE)~\cite{kipf2016variational} and its non-probabilistic variant (GAE). Our kernel-based method is highly comparable with these state-of-the-art methods, showing the potential of applying the proposed framework in different applications on graphs.

\begin{table}[ht]
\caption{Accuracy (\%) of link prediction.}
\label{tb:link_results}
\centering
\setlength\tabcolsep{2.0pt}
\begin{tabular}{lcccccc}
\toprule
 & \multicolumn{2}{c}{Cora} & \multicolumn{2}{c}{Citeseer} & \multicolumn{2}{c}{Pubmed} \\ \cmidrule(lr){2-7}
Method & AUC & AP & AUC & AP & AUC & AP \\
\midrule
GAE~\cite{kipf2016variational} & 91.0 $\pm$ 0.02 & 92.0 $\pm$ 0.03 & 89.5 $\pm$ 0.04 & 89.9 $\pm$ 0.05 & {\bf 96.4 $\pm$ 0.00} & {\bf 96.5 $\pm$ 0.00} \\
VGAE~\cite{kipf2016variational} & 91.4 $\pm$ 0.01 & 92.6 $\pm$ 0.01 & 90.8 $\pm$ 0.02 & {\bf 92.0 $\pm$ 0.02} & 94.4 $\pm$ 0.02 & 94.7 $\pm$ 0.02 \\
Ours & {\bf 93.1 $\pm$ 0.06} & {\bf 93.2 $\pm$ 0.07} & {\bf 90.9 $\pm$ 0.08} & 91.8 $\pm$ 0.04 & 94.5 $\pm$ 0.03 & 94.2 $\pm$ 0.01\\
\bottomrule
\end{tabular}
\end{table}

\subsubsection{t-SNE visualization on Cora}
We visualize the node embeddings of GCN~\cite{kipf2017semi}, GAT~\cite{velivckovic2018graph} and our method on Cora with t-SNE in Fig.~\ref{fig:tsne_cora}. Our method produces tight and clear clustering embeddings (especially for the ``red'' points and ``violet'' points), which shows that compared with GCN and GAT, our method is able to learn more reasonable feature embeddings for nodes.
\begin{figure}[ht]
	\centering
	\includegraphics[width=0.9\linewidth]{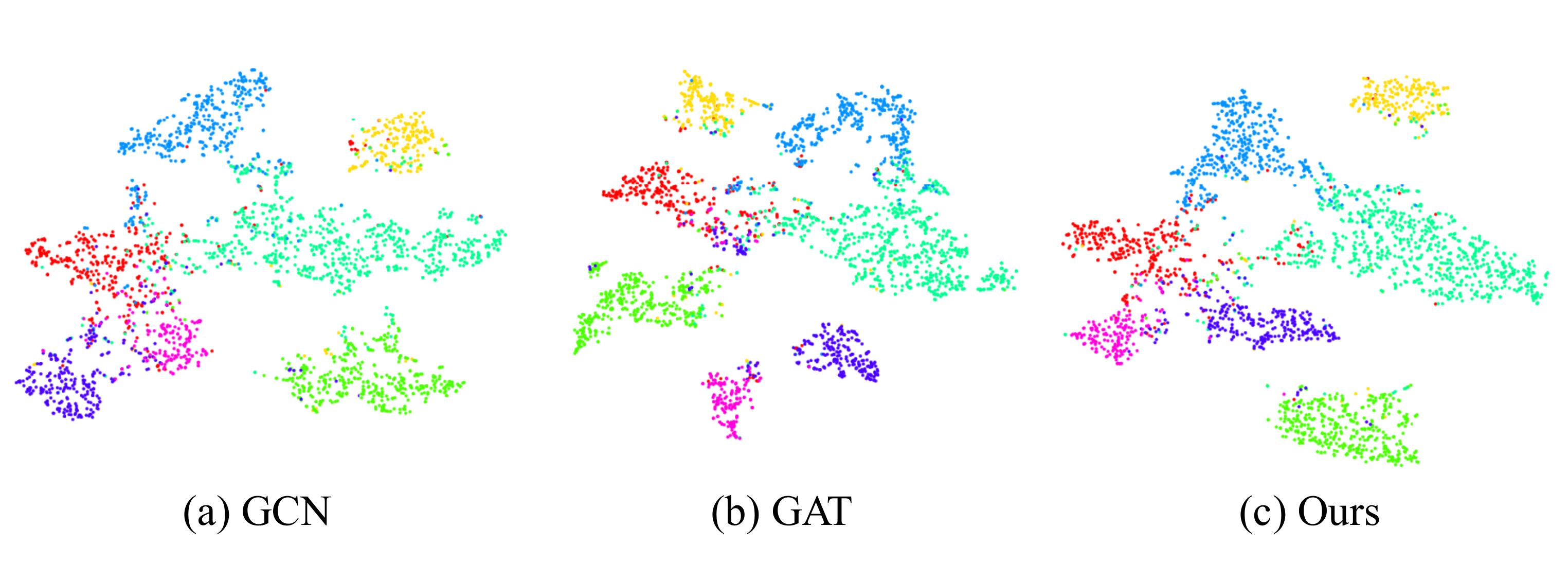}
	\caption{t-SNE visualization of node embeddings on Cora dataset.}
	\label{fig:tsne_cora}
\end{figure}

\end{appendices}

\bibliographystyle{abbrv}
\bibliography{refs}

\end{document}